\newcommand\crule[3][black]{\textcolor{#1}{\rule{#2}{#3}}}
\definecolor{LightGray}{gray}{0.92}
\definecolor{Gray}{gray}{0.92}
\newcommand{\argmax}{\operatornamewithlimits{arg\,max}}
\DeclarePairedDelimiterX{\inp}[2]{\langle}{\rangle}{#1, #2}
\definecolor{grey}{rgb}{0.33, 0.33, 0.33}
\newcommand{\PP}{\mathbb P}
\newcommand{\BR}{\mathbbm 1}
\newcommand{\E}{\mathbb E}
\newcommand{\eva}{\textsf{Eva}_\pi}
\newcommand{\eval}{J}
\newcommand{\rev}[1]{#1}
\newtheorem{theorem}{Theorem}
\newtheorem{lemma}{Lemma}
\DeclareMathAlphabet\mathbfcal{OMS}{cmsy}{b}{n}
\newif\ifsubmit
\newcommand{\hongyi}[1]{}
\newcommand{\jingkang}[1]{}
\newcommand{\yang}[1]{}
\newcommand{\yl}[1]{}
\newcommand{\jk}[1]{#1}
\newcommand{\hongyi}[1]{{\color{brown}[Hongyi: #1]}}
\newcommand{\jingkang}[1]{{\color{blue}[Jingkang: #1]}}
\newcommand{\yang}[1]{{\color{red}[Yang: #1]}}
\newcommand{\yl}[1]{{\color{red}[Yang: #1]}}
\newcommand{\jk}[1]{{\color{purple}#1}}
\title{Policy Learning Using Weak Supervision}
\author{
Jingkang Wang$^{*1,2}$ \quad Hongyi Guo$^{*3}$\quad Zhaowei Zhu$^{*4}$ \quad Yang Liu$^{4}$ \\ \vspace{0.6mm}
University of Toronto$^{1}$, Vector Institute$^{2}$, Northwestern University$^{3}$, UC Santa Cruz$^{4}$ \\
\texttt{wangjk@cs.toronto.edu} \quad \texttt{hongyiguo2025@u.northwestern.edu} \\ \texttt{zwzhu@ucsc.edu} \quad \texttt{yangliu@ucsc.edu}
}
\begin{document}
\maketitle

\etocdepthtag.toc{mtchapter}
\etocsettagdepth{mtchapter}{none}
\etocsettagdepth{mtappendix}{none}

\begin{abstract}
Most existing policy learning solutions require the learning agents to receive high-quality supervision signals such as well-designed rewards in reinforcement learning (RL) or high-quality expert demonstrations in behavioral cloning (BC). These quality supervisions are usually infeasible or prohibitively expensive to obtain in practice. We aim for a unified framework that leverages the available cheap weak supervisions to perform policy learning efficiently. To handle this problem, we treat the ``weak supervision'' as imperfect information coming from a \emph{peer agent}, and evaluate the learning agent's policy based on a  ``correlated agreement'' with the peer agent's policy (instead of simple agreements). Our approach explicitly punishes a policy for overfitting to the weak supervision. In addition to theoretical guarantees, extensive evaluations on tasks including RL with noisy rewards, BC with weak demonstrations, and standard policy co-training show that our method leads to substantial performance improvements, especially when the complexity or the noise of the learning environments is high. 
\end{abstract}

\section{Introduction}
Recent breakthroughs in policy learning (PL) open up the possibility to apply reinforcement learning (RL) or behavioral cloning (BC) in real-world applications such as %
robotics~\citep{mnih2015human,cube_hand} and self-driving~\citep{bojarski2016end,codevilla2018end}.
Most existing works require agents to receive high-quality supervision signals, e.g., reward or expert demonstrations, which are either infeasible or expensive to obtain in practice \cite{agarwal2016learning,gao2018reinforcement}. %

The outputs of reward functions in RL are subject to multiple kinds of randomness.
For example, the reward collected from sensors on a robot may be biased and have inherent noise due to physical conditions such as temperature and lighting \cite{wang2020rlnoisy,EverittKOL17,RomoffP0FP18}.
For the human-defined reward, different human instructors might provide drastically different feedback that leads to biased rewards \cite{loftin2014learning}.
Besides, the demonstrations by an expert in behavioral cloning (BC) are often imperfect due to limited resources and environment noise~\citep{LaskeyLFDG17,wu2019imitation,ReddyDL20}.
Therefore, learning from weak supervision signals such as noisy rewards \citep{wang2020rlnoisy} or low-quality demonstrations produced by untrustworthy expert~\citep{wu2019imitation,sasaki2020behavioral} is one of the outstanding challenges that prevents a wider application of PL.

Although some works have explored these topics separately in their specific domains~\citep{wang2020rlnoisy,GuoCYTC19,sasaki2020behavioral,lee2020weaklysupervised}, there lacks a unified solution for robust policy learning in imperfect situations. Moreover, the noise model as well as the corruption level in supervision signals is often required.
To handle these challenges, we first formulate a meta-framework to study RL/BC with weak supervision and call it \textit{weakly supervised policy learning}. Then we propose a theoretically principled solution, \textsf{PeerPL}, to perform efficient policy learning using the available weak supervision without requiring noise rates.

Our solution is inspired by peer loss \cite{yang2020peerloss}, a recently proposed loss function for learning with noisy labels but does not require the specification of noise rates. In peer loss, the noisy labels are treated as a \emph{peer agent}'s supervision. This loss function explicitly punishes the classifier from simply agreeing with the noisy labels, but would instead reward it for a ``correlated agreement" (CA). 
We adopt a similar idea and treat the ``weak supervision'' as the noisy information coming from an imperfect \emph{peer agent},  and evaluate the learning agent's policy based on  a ``correlated agreement'' (CA) with the weak supervision signals. Compared to standard reward and evaluation functions that encourage simple agreements with the supervision, our approach punishes ``over-agreement" to avoid overfitting to the weak supervision, which offers us a family of solutions that do not require prior knowledge of the corruption level in supervision signals. %

To summarize, the contributions in the paper are: (1) We provide a unified formulation of the \textit{weakly supervised policy learning} problems; (2) We propose \textsf{PeerPL}, a new way to perform policy evaluation for RL/BC tasks, and demonstrate how it adapts in challenging tasks including RL with noisy rewards and BC from weak demonstrations; (3) \textsf{PeerPL} is theoretically guaranteed to recover the optimal policy, as if the supervision are of high-quality and clean. %
(4) Experiment results show strong evidence that \textsf{PeerPL} brings significant improvements over state-of-the-art solutions. Code is online available at: \url{https://github.com/wangjksjtu/PeerPL}.

\subsection{Related Work}

\textbf{Learning with Noisy Supervision}~
Learning from noisy supervision is a widely explored topic. The seminal work~\citep{natarajan2013learning} first proposed an unbiased surrogate loss function to recover the true loss from the noisy label distribution, given the knowledge of the noise rates of labels. Follow-up works offered ways to estimate the noise level %
from model predictions
\citep{scott2013classification,scott2015rate,sukhbaatar2014learning,van2015learning,liu2015classification,menon2015learning,zhu2021second,northcutt2021confident,li2021provably} or label consensuses of nearby representations \cite{zhu2021clusterability}. 
Recent works also studied this problem in sequential settings including federated bandit \cite{zhu2021federated} and RL \citep{wang2020rlnoisy}. 
The former work assumes the noise can be offset by averaging rewards from multiple agents. \citep{wang2020rlnoisy} designs a statistics-based estimation algorithm for noise rates in observed rewards, which can be inefficient especially when the state-action space is huge. Moreover, the error in the estimation can accumulate and amplify in sequential problems. Inspired by recent advances of \textit{peer loss}~\cite{yang2020peerloss,wei2021when,yang21understand}, our solution is able to recover true supervision signals without requiring a priori specification of the noise rates. %

\textbf{Behavioral Cloning (BC)}~
Standard BC~\citep{pomerleau1991efficient,ross2010efficient} tackles the sequential decision-making problem by imitating the expert actions using supervised learning. Specifically, it aims to minimize the one-step deviation error over the expert trajectory without reasoning about the sequential consequences of actions. Therefore, the agent suffers from compounding errors when there is a mismatch between demonstrations and real states encountered~\citep{ross2010efficient,ross2011reduction,dart}. Recent works introduce data augmentations~\citep{end_bc_sdv} and value-based regularization~\citep{Reddy2019} or inverse dynamics models~\citep{Torabi2018,augmented_BCO} to encourage learning long-horizon behaviors.
While being simple and straightforward, BC has been widely investigated in a range of application domains~\citep{Giusti2016a,justesen2017learning} and often yields competitive performance~\citep{Farag2018,Reddy2019}. Our framework is complementary to the current BC literature by introducing a learning strategy from weak demonstrations (e.g., noisy or from a poorly-trained agent) and provides theoretical guarantees on how to retrieve clean policy under mild assumptions~\citep{SongLYO19}. %

\textbf{Correlated Agreement}~
In \cite{dasgupta2013crowdsourced, ShnayderAFP16}, a correlated agreement (CA) type of mechanism is proposed to evaluate the correlations between agents' reports. In addition to encouraging a certain agreement between agents' reports, CA also punishes over-agreement when two agents always report identically.
Recently, \cite{yang2020peerloss,wei2021when,zhu2021second} adapt a similar idea to noisy label learning thus offloading the burdens of estimating noise rates. We consider a more challenging sequential decision-making problem and study the convergence rates under noisy supervision signals.

\vspace{-0.05in}
\section{Policy Learning from Weak Supervision}
\vspace{-0.05in}

We begin by reviewing conventional reinforcement learning and behavioral cloning with clean supervision signals. Then we introduce the weak supervision problem in policy learning and define two concrete instantiations: (1) RL with noisy reward and (2) BC using weak expert demonstrations.

\subsection{Overview of Policy Learning}
\label{sec:preliminary}
The goal of policy learning (PL) is to learn a policy $\pi$ %
that the agent could follow to perform a series of actions in a stateful environment. For \emph{reinforcement learning}, the interactive environment is characterized as an MDP  $\mathcal{M} = \langle \mathcal{S}, \mathcal{A}, \mathcal{R}, \mathcal{P}, \gamma\rangle$. At each time $t$, the agent in state $s_t \in \mathcal{S}$ takes an action $a_t \in \mathcal{A}$ by following the policy $\pi: \mathcal{S}\times \mathcal{A}\rightarrow \mathbb R$, and \emph{potentially} receives a reward $r(s_t,a_t) \in \mathcal R$. Then the agent transfers to the next state {$s_{t+1}$} according to a transition probability function $\mathcal{P}$. We denote the generated trajectory $\tau = \{(s_t, a_t, r_t)\}_{t = 0}^T$, where $T$ is a finite or infinite horizon. RL algorithms aim to maximize the
expected reward over the trajectory $\tau$ induced by the policy: 
\underline{$
J^{\text{clean}}(\pi) = \mathbb{E}_{(s_t,a_t,r_t) \sim \tau} [ \sum_{t=0}^T \gamma^t r_t ]
$},
where $\gamma \in (0, 1]$ is the discount factor.

Another popular policy learning method is \emph{behavioral cloning}.
Let $\pi(\cdot|s)$ denotes the distribution over actions formed by $\pi$, and $\pi(a|s)$ be the probability of choosing action $a$ given state $s$ and policy $\pi$.
The goal of BC is to mimic the expert policy $\pi_E$ through a set of demonstrations %
$D_E = \{(s_i,a_i)\}_{i=1}^N$
drawn from a distribution $\mathcal{D}_{E}$,
where $(s_i,a_i)$ is the sampled state-action pair from the expert trajectory and $a_i \sim \pi_E(\cdot|s_i)$
Then training a policy with standard BC corresponds to maximizing the following log-likelihood: $\underline{J^{\text{clean}}(\pi) = \mathbb E_{(s,a) \sim \mathcal D_E} \left[\log \pi(a | s)\right]}.$

In both RL and BC, the learning agent receives supervision through either the \emph{(clean) reward $r$} by interacting with environments or the \emph{expert policy $\pi_E$} as observable demonstrations. Consider a particular policy class $\Pi$, the \emph{optimal policy} is then defined as \underline{$\pi^* = \argmax_{\pi \in \Pi} J^{\text{clean}}(\pi)$}: $\pi^*$ obtains the maximum expected reward over the horizon $T$ in RL and $\pi^*$ corresponds to the clean expert policy $\pi_E$ in BC. In practice, one can also combine both RL and BC approaches to take advantage of both learning paradigm~\citep{BrysHSCTN15,HesterVPLSPHQSO18,GuoCYTC19,SongLYO19}. Specifically, a recent hybrid framework called policy co-training~\citep{SongLYO19} will be considered in this paper. %

\subsection{Weak Supervision in Policy Learning}\label{sec:PL_noisy}
The \textit{weak supervision signal} $\widetilde{Y}$ could be noisy reward $\tilde{r}$ for RL or noisy action $\tilde{a}$ from an imperfect expert policy $\tilde{\pi}_E$ for BC, which are noisy versions of the corresponding high-quality supervision signals.
See more details below.

\begin{figure*}[t]
    \centering
    \includegraphics[width=1.0\textwidth]{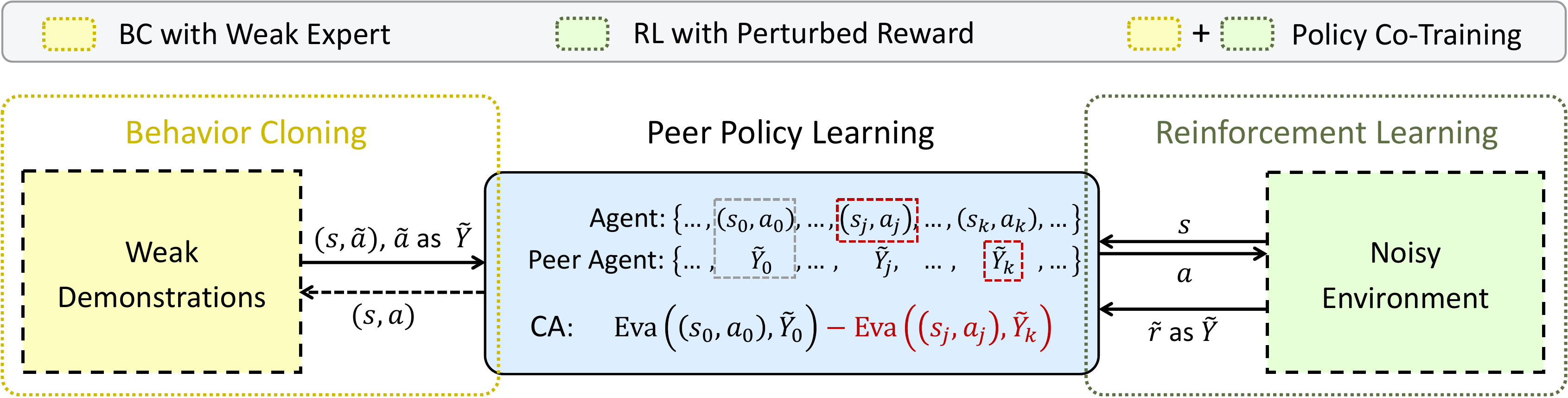}
    \caption{Illustration of \textit{weakly supervised policy learning} and our \textsf{PeerPL} solution with correlated agreement (CA). We use $\widetilde{Y}$ to denote a weak supervision, be it a noisy reward, or a noisy demonstration. $\textsf{Eva}$ stands for an evaluation function. ``Peer Agent'' corresponds to weak supervision.} %
    \vspace{-0.15in}
    \label{fig:overview}
\end{figure*}

\textbf{RL with Noisy Reward}~
\label{sec:rl_peer}
Consider a finite MDP $\mathcal{\widetilde{M}} = \langle \mathcal{S}, \mathcal{A}, \mathcal{R}, F, \mathcal{P}, \gamma\rangle$ with noisy reward channels~\citep{wang2020rlnoisy}, where $\mathcal{R} :
\mathcal{S} \times \mathcal{A} \rightarrow \mathbb R,$ and the noisy reward $\tilde{r}$ is generated following a certain function $F: \mathcal{R} \rightarrow \mathcal{\widetilde{R}}$. %
Denote the trajectory a policy $\pi_\theta$ generates via interacting with $\widetilde {\mathcal{M}}$ as $\tilde\tau_\theta$.
Assume the reward is discrete and has $|\mathcal R|$ levels. The noisy reward can be characterized via a unknown matrix \(\mathbf{C}_{|\mathcal R| \times |\mathcal R|}^\mathrm{RL}\), where each entry \(c_{j, k}\) indicates the flipping probability for generating a possibly different outcome: \(c_{j, k}^\mathrm{RL}=\mathbb{P}\left(\tilde{r}_{t}=R_{k} | r_{t}=R_{j}\right) \).
We call $r$ and $\tilde{r}$ the \textit{true reward} and \textit{noisy reward}. %

\textbf{BC with Weak Demonstration}~
Instead of observing the true expert demonstration generated according to $\pi_E$, denote the available weak demonstrations by $\{(s_i, \tilde{a}_i)\}_{i=1}^N$, where \rev{$\tilde a_i$ is is the noisy expert action drawn according to a random variable $\tilde a_i = \tilde\pi_E(s_i) \sim \tilde\pi_E(\cdot | s_i)$}, each state-action pair $(s_i, \tilde{a}_i)$ is sampled from distribution $\widetilde{\mathcal {D}}_{{E}}$.
\rev{Note there may exist two randomness factors in getting $\tilde a_i$: uncertainty in true policy $\pi_E$ and noise from imperfect policy $\tilde \pi_E$.}
In particular, \jk{we do not consider the former randomness
in theoretical analyses: given the output distribution $\pi_E(\cdot |s_i)$, only one deterministic action $\pi_E(s_i)$ is taken by expert. This is because with uncertainty in true expert actions, it is hard to distinguish a clean case with true expert actions from the weak supervision case without addition knowledge. Similar assumptions are also adopted in \cite{liu2015classification,zhu2021clusterability}.}
\rev{The noisy action is modeled by}
a unknown confusion matrix $\mathbf{C}_{|\mathcal A|\times |\mathcal A|}^\mathrm{BC}$, where each entry $c_{j,k}$ indicates the flipping probability for taking a sub-optimal action that differs from $\pi_E(s)$: $c_{j,k}^\mathrm{BC} = \mathbb P ( \tilde \pi_E(s) = A_k |  \pi_E(s) = A_j)$, $A_k$ and $A_j$ denote the $k$-th and the $j$-th action from the action space $\mathcal A$.
In the above definition, we assume the noisy action $\tilde a_i$ is independent of the state $s$ given the deterministic expert action $\pi_{E}(s)$, i.e.,
$
\PP (\tilde a_i | \pi_E(s_i) ) = \PP (\tilde a_i | s_i, \pi_E(s_i) ).
$ We aim to recover $\pi^*$ as if we were able to access the quality expert demonstration $\pi_E$ instead of $\tilde\pi_{E}$.

\textbf{Knowledge of $\mathbf{C}$}~ 
Recall $\mathbf C$: $\mathbf{C}_{|\mathcal R| \times |\mathcal R|}^\mathrm{RL}$ or $\mathbf{C}_{|\mathcal A|\times |\mathcal A|}^\mathrm{BC}$ is unknown in practice. While recent works estimate this matrix~\cite{northcutt2021confident,liu2015classification,zhu2021clusterability} in supervised classification problems, it is still challenging to generalize them to a sequential setting \cite{wang2020rlnoisy}. 
When $\mathbf C$ is not perfectly estimated, the estimation error of $\mathbf C$ may lead to unexpected state-action pairs then the error of reward estimates will be accumulated in sequential learning.
Besides, estimating $\mathbf C$ involves extra computation burden.
In contrast, our method gets rid of the above issues since it is free of any knowledge of $\mathbf C$ and leads to more robust policy learning algorithms. %

\textbf{Learning Goal}~
With full supervision, both RL and BC can converge to the optimal policy $\pi^*$. However, when only weak supervision is available, with an over-parameterized model such as a deep neural network, the learning agent will easily memorize the weak supervision and learn a biased policy \cite{liu2021importance}.
In our meta framework, instead of converging to any biased policy, we focus on learning the optimal policy $\pi^*$ with only a weak supervision sequence denoted as $\{(s_t, a_t), \widetilde{Y}_t\}_{t=1}^T$ (RL) or $\{(s_i, a_i), \widetilde{Y}_i\}_{i=1}^N$ (BC).

\section{PeerPL: Weakly Supervised PL via Correlated Agreement}
To deal with weak supervision in PL, we propose a unified and theoretically principled framework \textsf{PeerPL}. We treat the weak supervision as information coming from a ``peer agent'', and then evaluate the policy using a certain type of ``correlated agreement'' function between the learning policy and the peer agent's information. %

\subsection{A Unified Evaluation Function}\label{sec:framework}
We use an evaluation function $\eva((s_i,a_i), \widetilde{Y}_i)$ to evaluate %
a taken policy $\pi$ at agent state $(s_i, a_i)$ using the weak supervision $\widetilde{Y}_i$.
For RL, $\eva$ is the \jk{instance-wise measure (negative loss) for different RL algorithms}, which is a function of the noisy reward $\tilde{r}$ received at $(s_i,a_i)$.
In the BC setting, $\eva$ is the loss to evaluate the action $a_i$ taken by the agent given the expert's demonstration $\tilde a_i$. Note that the larger the $\eva$ is at state $(s_i, a_i)$, the better it follows the supervision $\widetilde{Y_i}$. \rev{Specifically, we have
\[
\eva^{\mathrm{RL}}\bigl((s, a), \tilde r\bigr) = -\ell\bigl(\pi, (s, a, \tilde{r})\bigr) ~~\text{(RL)} \quad \text{and} \quad  \eva^\mathrm{BC}\bigl((s, a), \tilde a\bigr) = \log\pi(\tilde a | s) ~~\text{(BC)},
\]}
\jk{where the RL loss function $\ell$ can be temporal difference error~\citep{dqn1,dueling-dqn} or the policy gradient loss~\citep{SuttonMSM99}.}
Furthermore, we let $J\left(\pi\right)$ denote the function that evaluates policy $\pi$ under a set of state action pairs with weak supervision sequence $\{(s_i, a_i), \widetilde{Y}_i\}_{i=1}^N$, i.e., $$J(\pi) = \mathbb E_{(s, a) \sim \tau}[\eva((s,a),\widetilde Y)].$$ \jk{Then the goal of weakly supervised policy learning is to recover the optimal policy $\pi^*$ as if we receive clean supervision $Y$. Note that directly maximizing $J(\pi)$ might result in sub-optimal performance due to the weak supervisions.}
The above unified notations are only for better delivery of our framework and we still treat PL as a sequential decision problem. 

\subsection{Overview of the Idea: Correlated Agreement with Weak supervision}

We first present the general idea of our \textsf{PeerPL} framework using a concept named correlated agreement (CA). For each weakly supervised sample $( (s_i,a_i), \widetilde{Y}_i )$, we randomly sample (with replacement) two other peer samples indexed by $j$ and $k$. Then we take the state-action pair $(s_j,a_j)$ of sample $j$ and the supervision signal $\widetilde Y_k$ of sample $k$, and evaluate $( (s_i,a_i), \widetilde{Y}_i )$ as follows:
\begin{align}
\textsf{CA with Weak Supervision:}~~\nonumber \quad \eva\bigl((s_i,a_i), \widetilde{Y}_i\bigr) - \eva\bigl((s_j,a_j), \widetilde{Y}_k\bigr).
\end{align}
This operation is illustrated in Figure~\ref{fig:overview}. We further show intuitions and a toy example below.

\textbf{Intuition}~
The first term above encourages an ``agreement'' with the weak supervision (that a policy agrees with the corresponding supervision), while the second term punishes a ``blind'' and ``over'' agreement that happens when the agent's policy always matches with the weak supervision even on randomly paired traces (noise). The randomly paired instances $j,k$ help us achieve this check. Note our mechanism does not require the knowledge of $\mathbf{C}_{|\mathcal R| \times |\mathcal R|}^\mathrm{RL}$ nor $\mathbf{C}_{|\mathcal A|\times |\mathcal A|}^\mathrm{BC}$, and offers a \textbf{prior-knowledge free} way to learn effectively with weak supervision.

\textbf{Toy Example}~
Consider a toy BC setting where the policy fully memorizes the weak supervision and outputs the same sequence of actions given the same  sequence of states, i.e.,
\[
\text{Weak-supervision:}~\tilde a_1=\tilde a_2 = \tilde a_3 = 1, \tilde a_4 = 0; \quad \text{Outputs:}~ a_1= a_2 = a_3 = 1, a_4 = 0.
\]
Let $\eva((s_i,a_i), \tilde a_i) = 1$ if the policy output agrees with the weak demonstration ($a_i = \tilde a_i$), and $0$ otherwise. When the policy fully memorizes weak supervisions, we have: 
\begin{align*}
    \text{\underline{\emph{Without CA:}}} & \quad \E[\eva((s_i,a_i), \tilde a_i)] =1, \\
    \text{\underline{\emph{With CA:}}} &\quad   \mathbb E[  \eva((s_i,a_i), \tilde a_i) - \eva((s_j,a_j), \tilde a_k)] =0.375,
\end{align*}
where $0.375 = 1 - (0.75^2+0.25^2)$ is obtained by considering the probability of randomly paired $a_j$ and $\tilde a_k$ matching each other. The above example shows that a full agreement with the weak supervision will instead be punished.

In what follows, we showcase two concrete implementations: PeerRL (peer reinforcement learning) and PeerBC (peer behavioral cloning). We provide algorithms and theoretical guarantees under weak supervisions.

\section{PeerRL: Peer Reinforcement Learning}
\label{sec:peer_rl}

We propose the following objective function to punish the over-agreement of \jk{parametric policy $\pi_\theta$} based on CA:
\begin{align}
    \label{eq:eval_RL}
    &\eval^{\mathrm{RL}} (\pi_\theta) 
    = \mathbb E\Bigl[\eva^{\mathrm{RL}}\bigl((s_i, a_i), \tilde r_i\bigr)\Bigr]  - \xi \cdot \mathbb E\Bigl[\eva^{\mathrm{RL}}\bigl((s_j, a_j), \tilde r_k\bigr)\Bigr],\\
    \label{eq:eva_RL}
    &\text{where} \quad \eva^{\mathrm{RL}}\bigl((s, a), \tilde r\bigr) = -\ell\bigl(\pi_\theta, (s, a, \tilde{r})\bigr). %
\end{align}
\jk{In (\ref{eq:eval_RL}), the first expectation is taken over $(s_i,a_i,\tilde r_i) \sim \tilde\tau$ and second one is taken over $(s_j,a_j,\tilde r_j) \sim \tilde\tau, (s_k,a_k,\tilde r_k) \sim \tilde\tau$, where $\tilde\tau$} is the trajectory specified by the noisy reward function $\tilde r$. Recall $j,k$ denote two randomly and independently sampled instances. Loss function $\ell$ depends on the employed RL algorithms, e.g., temporal difference error~\citep{dqn1,dueling-dqn} or the policy gradient loss~\citep{SuttonMSM99}. The learning sequence is encoded in $\pi$. The objective $J^{\mathrm{RL}}(\pi)$ represents the accumulated peer RL reward. Parameter $\xi \geq 0$ balances the penalty for blind agreements induced by CA.

\subsection{Peer Reward}\label{sec:benefit_peerRL}
In what follows, we consider the $Q$-Learning~\citep{Watkins92q-learning} as the underlying learning algorithm where \rev{$\ell(\pi_\theta, (s, a, \tilde{r})) = -\tilde{r}(s,a)$} and demonstrate that the CA mechanism provides strong guarantees for $Q$-Learning with only observing the noisy reward. For clarity, we define \textit{peer RL reward}: %
\[
\textsf{Peer Reward:}~~\quad\tilde{r}_{\mathrm{peer}}(s, a) = \tilde{r}(s, a) - \xi \cdot \tilde{r}^\prime,
\]
where 
$\tilde r^{\prime} \overset{\pi_{\mathrm{sample}}}{\sim} \{\tilde r(s, a)| s \in \mathcal{S}, a \in \mathcal{A} \}$
is a reward sampled over all state-action pairs according to a fixed policy $\pi_{\mathrm{sample}}$.
Note the sampling policy $\pi_{\mathrm{sample}}$ is independent of $\pi$ and the choice of $\pi_{\mathrm{sample}}$ does not affect our theoretical results. We adopt a random sampling strategy in practice.
Parameter $\xi \geq 0$ balances the noisy reward and the punishment for blind agreement (with $\tilde{r}^\prime$). 
We set $\xi=1$ (for binary case) in the following analysis and treat each $(s, a)$ equally when sampling \rev{$\tilde r^\prime$}. %
In experiments, we find $\tilde{r}_{\mathrm{peer}}$ is not sensitive to the choice of $\xi$ and keep $\xi$ constant for each run. %

\textbf{Robustness to Noisy Rewards}~ Now we show peer reward $\tilde{r}_{\mathrm{peer}}$ offers us an affine transformation of the true reward in expectation, which guarantees that our PeerRL algorithm converges to $\pi^*$.
Consider the binary reward setting ($r_+$ and $r_-$) and denote the error in $\tilde{r}$ as $
e_+ = \mathbb{P}(\tilde{r} = r_-| r = r_+),
e_- = \mathbb P(\tilde{r} = r_+| r = r_-)
$ (a simplification of $\mathbf{C}_{|\mathcal R| \times |\mathcal R|}^\mathrm{RL}$ in the binary setting).  %

\begin{lemma}
Let $r \in [0, R_{\mathrm{max}}]$ be a bounded reward, $\xi = 1$. Assume $1 - e_{-} - e_{+} > 0$. %
We have:
\rev{\begin{align*}
    \mathbb E [\tilde{r}_{\mathrm{peer}}(s,a)] 
    = (1 - e_{-} - e_{+}) \cdot \mathbb E [r_{\mathrm{peer}}(s,a)]= (1 - e_{-} - e_{+}) \cdot \mathbb E [r(s,a)] + \text{const}~,
\end{align*}
where $r_{\text{peer}}(s, a) = r(s, a) - r^\prime$ is the peer RL reward when observing the true reward $r$, and $r'$ is the true reward corresponding to $\tilde r'$.}
\label{lemma:unbiased}
\end{lemma}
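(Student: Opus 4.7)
The plan is to prove the identity by first showing that taking the expectation of $\tilde r(s,a)$ under the binary noise model yields an affine transformation of the true reward $\mathbb{E}[r(s,a)]$, and then observing that the additive constants cancel when we subtract the peer term $\xi \tilde r'$.

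First I would condition on the true reward at state-action pair $(s,a)$. Writing $p_+(s,a) \Def \mathbb{P}(r(s,a)=r_+)$, the flipping model gives $\tilde p_+(s,a) \Def \mathbb{P}(\tilde r(s,a)=r_+) = (1-e_+)\,p_+(s,a) + e_-\,(1-p_+(s,a))$. A direct expansion then yields
\begin{equation*}
\mathbb{E}[\tilde r(s,a)] \;=\; (1-e_+-e_-)\,\mathbb{E}[r(s,a)] \;+\; c,
\end{equation*}
where $c = e_+ r_- + e_- r_+$ does not depend on $(s,a)$. This is the standard ``affine noise'' calculation underlying noise-corrected losses, and it is essentially the only place the binary structure and the assumption $1-e_--e_+>0$ are used (the latter ensures the slope is nontrivial).

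Next I would apply the same identity pointwise to the sampled peer reward. Since $\tilde r'$ is drawn according to a fixed policy $\pi_{\mathrm{sample}}$ that is independent of $(s,a)$ and of the noise realization at $(s,a)$, taking expectations over that sampling distribution gives $\mathbb{E}[\tilde r'] = (1-e_+-e_-)\,\mathbb{E}[r'] + c$ with the same additive constant $c$. Substituting both expressions into the definition $\tilde r_{\mathrm{peer}}(s,a) = \tilde r(s,a) - \xi\,\tilde r'$ with $\xi=1$, the constant $c$ cancels, producing
\begin{equation*}
\mathbb{E}[\tilde r_{\mathrm{peer}}(s,a)] \;=\; (1-e_+-e_-)\,\bigl(\mathbb{E}[r(s,a)] - \mathbb{E}[r']\bigr) \;=\; (1-e_+-e_-)\,\mathbb{E}[r_{\mathrm{peer}}(s,a)],
\end{equation*}
which is the first equality claimed. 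The second equality follows immediately by absorbing the $(s,a)$-independent term $-(1-e_+-e_-)\,\mathbb{E}[r']$ into ``$\text{const}$''.

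There is no real obstacle here beyond bookkeeping; the conceptual point to get right is that the bias term $c$ is identical in both $\mathbb{E}[\tilde r(s,a)]$ and $\mathbb{E}[\tilde r']$ \emph{precisely because} $\pi_{\mathrm{sample}}$ (and hence the marginal of $\tilde r'$) is fixed and independent of $(s,a)$, so the subtraction cleanly removes the additive offset and leaves only the multiplicative shrinkage factor $1-e_+-e_-$. The positivity of this factor, guaranteed by the stated assumption, is what ensures that maximizing $\mathbb{E}[\tilde r_{\mathrm{peer}}]$ is equivalent to maximizing $\mathbb{E}[r]$ up to a constant, which is what the subsequent $Q$-learning convergence argument will need.
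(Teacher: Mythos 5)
Your proof is correct and follows essentially the same route as the paper's: compute $\mathbb{E}[\tilde r(s,a)]$ and $\mathbb{E}[\tilde r']$ under the binary flipping model, observe that both carry the same additive offset $e_- r_+ + e_+ r_-$ (because the noise rates are state-action independent and $\pi_{\mathrm{sample}}$ is fixed), and let the subtraction with $\xi=1$ cancel it, leaving the factor $1-e_--e_+$ times $\mathbb{E}[r_{\mathrm{peer}}]$. The only difference is presentational -- you invoke the affine-noise identity once and apply it to both terms, whereas the paper expands $\mathbb{E}[\tilde r']$ explicitly via $p_{\mathrm{peer}}$ -- so no gap to report.
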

Lemma~\ref{lemma:unbiased} shows that by subtracting the peer penalty term $\tilde{r}'$ from noisy reward $\tilde{r}(s,a)$, $\tilde{r}_{\mathrm{peer}}(s,a)$ recovers the clean and true reward $r(s,a)$ in expectation.
Based on Lemma~\ref{lemma:unbiased}, we prove in
Theorem~\ref{thm:q_learn_convergence_ap} that the $Q$-learning agent will converge to the optimal policy \textit{w.p.1} with peer rewards without requiring any knowledge of the corruption in rewards ($\mathbf{C}_{|\mathcal R| \times |\mathcal R|}^\mathrm{RL}$, as opposed to previous work ~\citep{wang2020rlnoisy} that requires such knowledge). 
Moreover, we prove in Theorem~\ref{thm:sample_complexity} that to guarantee the convergence to $\pi^*$, the number of samples needed for our approach is no more than $\mathcal{O}(1/(1 - e_{-} - e_{+})^2)$ times of the one needed when the RL agent observes true rewards perfectly (see Appendix~\ref{ap:rl}). %

\textbf{Extension}~ Even though we only present an analysis for the binary case for $Q$-Learning, our approach is rather generic and is ready to be plugged into modern DRL algorithms. We provide \emph{multi-reward extensions}, implementations with DQN~\citep{dqn1} and policy gradient~\citep{SuttonMSM99} in Appendix~\ref{ap:rl}.

\vspace{-0.05in}
\subsection{Why does Peer Reward Work?}
\vspace{-0.05in}
Compared with noisy reward, proposed peer variant is a less biased estimation of true reward \emph{(Benefit-1)}. On the other hand, PeerRL helps break the unstable ``tie'' states, which might encourage the agent to explore in the early stage \cite{pathak2017curiosity} \emph{(Benefit-2)}.

\textbf{Benefit-1: PeerRL reduces the bias}~
We highlight that the biased noise model considered is rather generic, departing from the previous noise assumption such as zero-mean Gaussian noise~\cite{EverittKOL17,RomoffP0FP18}. In zero-mean noise models, the major focus is on variance reduction so adding the random term $\tilde r'$ increases the variance thus resulting in worse estimation. However, in the discrete biased noise model \cite{natarajan2013learning}, bias correction also plays an important role especially the noise rate is high~\cite{wang2020rlnoisy}.

Similar to peer reward (Lemma~\ref{lemma:unbiased}), the expectation of the noisy reward writes as:
$
    \mathbb E[\tilde r(s,a)] 
    = (1 - e_- - e_+) \mathbb E[r(s,a)] + e_- r_+ + e_+ r_- = (1 - e_- - e_+) \mathbb E[r(s,a)] + \textit{\text{const}}.
$
But the constant in peer reward has less effect on the true reward \(r\), especially when the noise rate is high. To see this:
\begin{align*}
    \text{noisy reward:}\quad  &\mathbb E[\tilde r(s,a)] = \eta \cdot \left(\mathbb E[r(s,a)] + \tfrac{e_+}{1 - e_- - e_+} r_- + \tfrac{e_-}{1 - e_- - e_+} r_+\right),\\
   \text{peer reward:}\quad  &\mathbb E [\tilde{r}_{\mathrm{peer}}(s,a)] = \eta \cdot (\mathbb E [r(s,a)] - (1 - p_\text{peer}) r_- - p_\text{peer} r_+),
\end{align*}
where $\eta = 1 - e_{-} - e_{+} > 0$, \(p_\text{peer} \in [0,1]\) denotes the probability that a sample policy sees a reward \(r_+\) overall. Since the magnitude of noise terms $\frac{e_-}{1 - e_- - e_+}$ and $\frac{e_+}{1 - e_- - e_+}$ can potentially become much larger than $1 - p_\text{peer}$ and $p_\text{peer}$ in a high-noise regime, $ \frac{e_-}{1 - e_- - e_+} r_+ + \frac{e_+}{1 - e_- - e_+} r_-$ will dilute the informativeness of $\mathbb E[r(s,a)]$. On the contrary, $\mathbb E[\tilde r_\text{peer}(s,a)]$ contains a moderate constant noise thus maintaining more useful training signals of the true reward in practice.
In summary, although peer reward (similar to the surrogate reward in previous literature \cite{wang2020rlnoisy}) increases the variance (no free-lunch), it will lead to a better estimation of the true reward due to lower bias.

\begin{wrapfigure}{R}{0.35\textwidth}
\vspace{-5mm}
\begin{minipage}{0.35\textwidth}
    \centering
    \includegraphics[width=\textwidth,trim={0 0 28cm 0},clip]{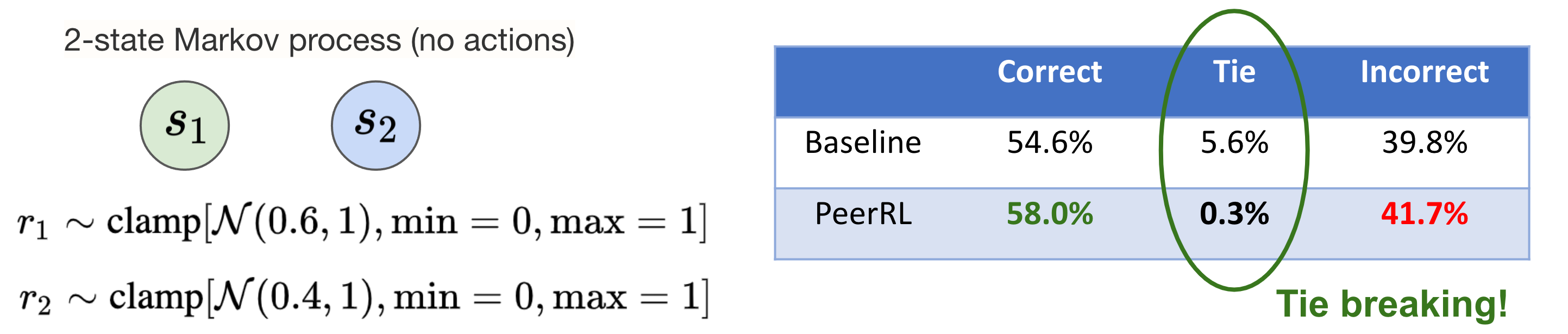}
\end{minipage}
\begin{minipage}{0.35\textwidth}
\vspace{2mm}
\resizebox{\textwidth}{!}{
\begin{tabular}{@{}cccc@{}}
\toprule
\multicolumn{1}{l}{} & \multicolumn{1}{l}{Correct} & \multicolumn{1}{l}{Tie} & \multicolumn{1}{l}{Incorrect} \\ \midrule
\multicolumn{1}{c}{baseline} & \multicolumn{1}{c}{54.6\%} & \multicolumn{1}{c}{5.6\%} & \multicolumn{1}{c}{39.8\%} \\ \midrule
PeerRL & {\color[HTML]{38761D} \textbf{58.0\%}} & \textbf{0.3\%} & {\color[HTML]{FF0000} \textbf{41.7\%}} \\ \bottomrule
\end{tabular}
}
\end{minipage}
\vspace{-3mm}
\end{wrapfigure}

\textbf{Benefit-2: PeerRL helps break ties}~
For RL, ``tie'' states indicate that the rewards for different states are the same, which are less informative as they neither serve as positive nor negative examples. %
Due to the discrete nature of the noise model, adding a randomly sampled penalty term helps break the tie states and treats them as either positive examples or negative examples such that it can encourage exploration in the early stage, which has similar intuitions to some RL exploration works \cite{pathak2017curiosity}. It has also been demonstrated that reducing the uncertainty, a.k.a. pushing confident predictions, makes the learning robust to weak-supervisions in supervised learning \cite{yang2020peerloss,cheng2021learning}.
On the other hand, it is known that positive examples are sparse yet important in RL. To leverage these useful experiences sufficiently, experience replay~\cite{prioritized_experience_replay,double-dqn} is invented to store and up-sample the positive examples for faster convergence. Tie breaking potentially provides an alternative way to access more positive examples. 
To illustrate \textit{tie-breaking} phenomenon when using peer reward, we consider a two-state Markov process (no actions) with bounded Gaussian noise and see how well we could infer which state was better by correcting the reward signals.
\jk{We collect two observations for each state and conduct $10^4$ trials to calculate the success rate of inferring which state has larger returns (``correct'' in the Table).}
As we can see, PeerRL exploits the "discreteness" of the reward thus breaking ties to obtain more examples with good-quality supervision.
More examples on varied noise models (bounded continuous noise, discrete noise) are deferred to Appendix~\ref{supp:tie-break}. %

\vspace{-0.05in}
\section{PeerBC: Peer Behavioral Cloning}
\vspace{-0.05in}
Similarly, we present our CA solution in the setting of behavioral cloning (PeerBC).
In BC, the supervision is given by the weak expert's noisy trajectory. 
At each iteration, the agent learns under weak supervision $\tilde{a}$, and the training samples are generated from the distribution $\widetilde{\mathcal D}_E$ determined by the weak expert. 
The $\eva^\mathrm{BC}$ function in BC evaluates the agent policy $\pi_{\theta}$, parametrized by $\theta$, and the weak trajectory $\{(s_i, \tilde{a}_i)\}_{i=1}^N$ using $ \ell (\pi_\theta, (s_i, \tilde{a}_i))$, where $\ell$ is an arbitrary classification loss. Taking the cross-entropy for instance, the objective of PeerBC is:
\begin{align}
    \label{eq:eval_IL}
    &J^{\mathrm{BC}} (\pi_\theta) 
    = \mathbb E\Bigl[\eva^\mathrm{BC}\bigl((s_i, a_i), \tilde a_i\bigr)\Bigr] - \xi \cdot \mathbb E\Bigl[\eva^\mathrm{BC}\bigl((s_j, a_j), \tilde a_k\bigr)\Bigr],\\
    \label{eq:eva_IL}
    &\text{where} \quad \eva^\mathrm{BC}\bigl(\bigl(s, a), \tilde a\bigr) = -\ell \bigl(\pi_\theta, (s, \tilde a)\bigr) = \log\pi_\theta(\tilde a | s).
\end{align}
In (\ref{eq:eval_IL}), the first expectation is taken over $(s_i, \tilde a_i) \sim \widetilde{\mathcal{D}}_E, a_i \sim \pi(\cdot | s_i)$ and the second is taken over $(s_j, \tilde a_j) \sim \widetilde{\mathcal{D}}_E, a_j \sim \pi(\cdot | s_j), (s_k, \tilde a_k) \sim \widetilde{\mathcal{D}}_E, a_k \sim \pi(\cdot | s_k)$. Again, the second $\eva^\mathrm{BC}$ term in $J^{\mathrm{BC}}$ serves the purpose of punishing over-agreement with the weak demonstration. Similarly, $\xi \geq 0$ is a parameter to balance the penalty for blind agreements.

\textbf{Robustness to Noisy Demonstrations}~ We prove that the policy learned by PeerBC converges to the expert policy when observing a sufficient amount of weak demonstrations. 
We focus on the binary action setting for theoretical analyses, where the action space is given by $\mathcal A = \{A_+,A_-\}$ and the weakness or noise in the weak expert $\tilde{\pi}_E$ is quantified by $e_+ = \mathbb P(\tilde{\pi}_ E(s) = A_- | \pi_E(s) = A_+)$ and $e_- = \mathbb P(\tilde{\pi}_ E(s) = A_+ | \pi_E(s) = A_-)$. 
Let {\small$\pi_{{\widetilde { D}}_E}$} be the optimal policy for maximizing the objective in (\ref{eq:eval_IL}) with imperfect demonstrations {\small$\widetilde { D}_E$} (a particular set of with $N$ i.i.d. imperfect demonstrations).
Note $\ell(\cdot)$ is specified as the 0-1 loss: $\BR(\pi(s),a)=1$ when $\pi(s) \ne a$, otherwise $\BR(\pi(s),a)=0$.
We have the following upper bound on the error rate.
\begin{theorem}\label{thrm:error}
Denote by $R_{ {\widetilde { D}}_E} \coloneqq \PP_{(s,a) \sim {\mathcal { D}}_E}(\pi_{{\widetilde { D}}_E}(s)\ne a)$ the error rate for PeerBC. When $e_+ + e_- < 1$, with probability at least $1-\delta$, it is upper-bounded as:
$
       R_{{\widetilde { D}}_E} \le 
         \frac{1+\xi}{1-e_- - e_+}\sqrt{\frac{2\log 2/\delta}{N}}.
$
\end{theorem}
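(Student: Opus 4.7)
The plan is to derive the bound in three stages: (i) recognize the 0-1 instantiation of the objective in~\eqref{eq:eval_IL} as a peer risk to be minimized; (ii) establish a peer-loss identity that relates this risk to the clean error rate $R(\pi) := \PP_{(s,a)\sim \mathcal D_E}(\pi(s)\neq a)$; (iii) combine ERM optimality of $\pi_{\widetilde D_E}$ with Hoeffding concentration.

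For stage (i), I would write the population and empirical peer objectives as $L_{\text{peer}}(\pi) := \PP(\pi(s_i)\neq \tilde a_i) - \xi \cdot \PP(\pi(s_j)\neq \tilde a_k)$ and $\hat L_{\text{peer}}(\pi)$ respectively, where $s_j$ and $\tilde a_k$ are drawn independently from the marginals of $\widetilde{\mathcal D}_E$. Under the 0-1 specialization $\ell(\pi(s),\tilde a) = \BR(\pi(s)\neq \tilde a)$, maximizing $J^{\mathrm{BC}}$ is equivalent to minimizing $L_{\text{peer}}$, so $\pi_{\widetilde D_E} = \arg\min_\pi \hat L_{\text{peer}}(\pi)$ and $\pi_E$ is the clean-population minimizer.

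For stage (ii), I would adapt the correlated-agreement/peer-loss argument referenced in the preamble to establish that, under $e_- + e_+ < 1$ and a deterministic clean expert,
$L_{\text{peer}}(\pi) - L_{\text{peer}}(\pi_E) \geq (1 - e_- - e_+)\cdot R(\pi)$.
This is the step I expect to be the hardest. The argument conditions on the true label $a = \pi_E(s)$, expands the direct term via the flipping probabilities $e_\pm$, expands the peer term via the product of the policy marginal and the noisy-label marginal (using independence of the peer indices $j$ and $k$), and then recombines the coefficients so that the gap factors cleanly as $(1 - e_- - e_+)$ times a disagreement probability. Tracking the bookkeeping in the asymmetric $e_+ \neq e_-$ case and verifying that the peer subtraction neutralizes the label-prior contribution so that the residual is lower-bounded by $R(\pi)$ is the delicate part.

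For stage (iii), the ERM inequality $\hat L_{\text{peer}}(\pi_{\widetilde D_E}) \leq \hat L_{\text{peer}}(\pi_E)$ gives $L_{\text{peer}}(\pi_{\widetilde D_E}) - L_{\text{peer}}(\pi_E) \leq \bigl[L_{\text{peer}}(\pi_{\widetilde D_E}) - \hat L_{\text{peer}}(\pi_{\widetilde D_E})\bigr] + \bigl[\hat L_{\text{peer}}(\pi_E) - L_{\text{peer}}(\pi_E)\bigr]$. Each peer-loss observation lies in $[-\xi,1]$ (range $1+\xi$), so Hoeffding applied to each empirical average with a union bound over the two fixed policies at total failure probability $\delta$ bounds the right-hand side by $(1+\xi)\sqrt{2\log(2/\delta)/N}$. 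Dividing by $1-e_--e_+$ using the inequality from stage (ii) delivers the claimed bound on $R_{\widetilde D_E}$. A minor secondary obstacle is that the empirical peer term is formed by random pairings of the $N$ samples rather than i.i.d.\ draws; this can be sidestepped either by treating the two expectations as separate empirical means (each with its own Hoeffding bound) or by a McDiarmid-style bounded-differences argument.
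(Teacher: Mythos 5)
Your proposal follows essentially the same route as the paper's proof: Hoeffding concentration on the empirical peer objective (whose per-sample range is $1+\xi$), the ERM decomposition bounding the excess noisy peer risk by $(1+\xi)\sqrt{2\log(2/\delta)/N}$, the peer-loss calibration step converting this to clean excess risk with the factor $1/(1-e_--e_+)$, and $R_{\mathcal D_E}(\pi_E)=0$ from the deterministic-expert assumption. The only difference is that your stage (ii) proposes re-deriving the calibration inequality that the paper simply imports from the peer-loss literature \cite{yang2020peerloss}, and you flag the non-i.i.d.\ pairing issue in the empirical peer term, which the paper glosses over.
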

Theorem~\ref{thrm:error} states that as long as weak demonstrations are observed sufficiently, i.e., $N$ is sufficiently large, the policy learned by PeerBC is able to converge to the clean expert policy $\pi_E(s)$ with a convergence rate of $\mathcal O\big(1/\sqrt{N}\big)$.

\begin{figure}[t]
\begin{minipage}{\textwidth}
\begin{minipage}[c]{0.49\textwidth}
\begin{figure}[H]
\begin{subfigure}[b]{.49\textwidth}
    \centering
    \includegraphics[width=\textwidth]{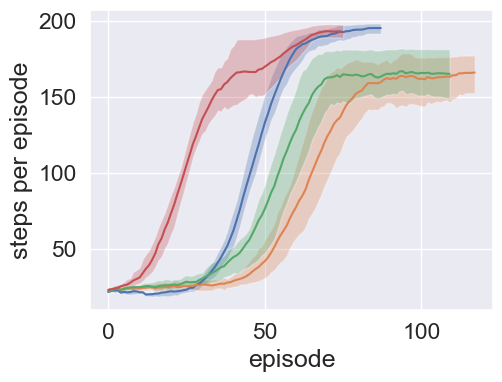}
    \vspace{-0.22in}
    \caption{$e = 0.2$}
\end{subfigure} %
\begin{subfigure}[b]{.49\textwidth}
    \centering
    \includegraphics[width=\textwidth]{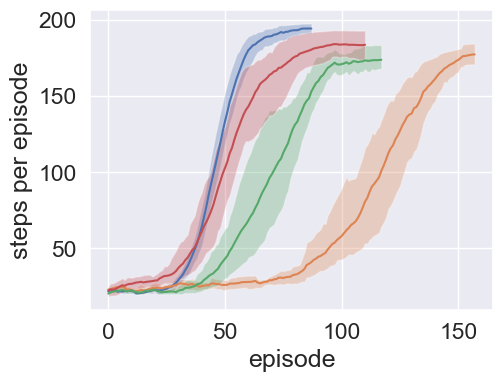}
    \vspace{-0.22in}
    \caption{$e = 0.4$}    
\end{subfigure}
\vskip -0.1in
\caption{Learning curves of DDQN on CartPole-v0 with true reward ($r$)~\crule[blue]{0.30cm}{0.30cm}, noisy reward ($\tilde{r}$)~\crule[orange]{0.30cm}{0.30cm}, surrogate reward~\citep{wang2020rlnoisy} ($\hat{r}$)~\crule[green]{0.30cm}{0.30cm}, and peer reward ($\tilde{r}_{\text{peer}}$, $\xi = 0.2$)~\crule[red]{0.30cm}{0.30cm}. %
}
\label{fig:cartpole}
\end{figure}
\end{minipage}
\hfill
\begin{minipage}[c]{0.49\textwidth}
\vskip -0.15in
\begin{figure}[H]
\begin{subfigure}[b]{.49\textwidth}
    \centering
    \includegraphics[width=\textwidth]{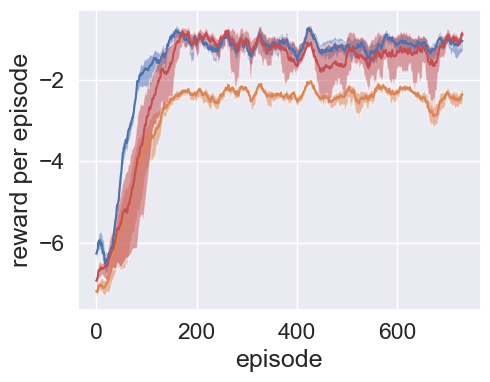}
    \vspace{-0.22in}
    \caption{$e = 0.2$}
\end{subfigure}
\begin{subfigure}[b]{.49\textwidth}
    \centering-    \includegraphics[width=\textwidth]{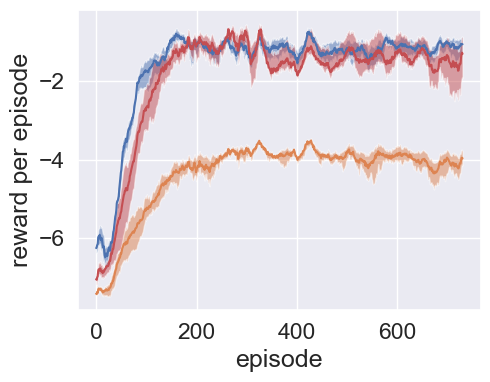}
    \vspace{-0.22in}
    \caption{$e = 0.4$}    
\end{subfigure}
\vskip -0.10in
\caption{Learning curves of DDPG~\citep{ddpg} on Pendulum with true reward ($r$)~\crule[blue]{0.30cm}{0.30cm}, noisy reward ($\tilde{r}$)~\crule[orange]{0.30cm}{0.30cm}, and peer reward ($\tilde{r}_{\text{peer}}, \xi=0.2$)~\crule[red]{0.30cm}{0.30cm}. %
}
\label{fig:pendulum}
\end{figure}
\end{minipage}
\end{minipage}
\end{figure}

\begin{wrapfigure}{R}{0.6\textwidth}
\vspace{-8mm}
\begin{minipage}{0.6\textwidth}
\begin{algorithm}[H]
    \caption{Peer policy co-training (PeerCT)}
    \label{alg:co-train-peer}
    \begin{footnotesize}
    \begin{algorithmic}[1]
        \REQUIRE Views $A$, $B$, MDPs $\mathcal M^A$, $\mathcal M^B$, policies $\pi_A, \pi_B$, mapping functions $f_{A\rightarrow B}, f_{B\rightarrow A}$ that maps states from one view to the other view, CA coefficient $\xi$, step size $\beta$ for policy update.
        \REPEAT
            \STATE Run \(\pi^{A}\) %
            to generate trajectories \(\tau^{A} = \{(s_i^A, a_i^A, r_i^A)\}_{i=1}^N\).
            \STATE Run \(\pi^{B}\) %
            to generate trajectories \(\tau^{B} = \{(s_j^B, a_j^B, r_j^B)\}_{j=1}^M\).
            \STATE Agents label the trajectories for each other
            \vspace{-2mm}
            \begin{align*}
            \tilde \tau^{A} &\leftarrow \bigl\{(s^A_i, \pi_B\bigl(f_{B\leftarrow A}(s^A_i)\bigr)\bigr\}_{i=1}^N, \\
            \tilde \tau^{B} &\leftarrow \bigl\{(s^B_j, \pi_A\bigl(f_{A\leftarrow B}(s^B_j)\bigr)\bigl\}_{j=1}^M.
            \end{align*}
            \vspace{-4mm}
            \STATE Update policies: 
            $
                \pi_{\{A,B\}} \leftarrow \pi_{\{A,B\}} + \beta \cdot \nabla  \eval^{\mathrm{CT}}(\pi_{\{A,B\}})
            $
        \UNTIL{convergence}
    \end{algorithmic}
    \end{footnotesize}
    \end{algorithm}
\end{minipage}
\vspace{-6mm}
\end{wrapfigure}

\textbf{Peer Policy Co-Training}~ 
Our discussion of BC allows us to study a more challenging co-training task~\citep{SongLYO19}. %
Given a finite MDP  $\mathcal{M}$, there are two agents that receive partial observations and we let $\pi_A$ and $\pi_B$ denote the policies for agent $A$ and $B$. Moreover, two agents are trained jointly to learn with rewards and noisy demonstrations from each other (e.g., at the preliminary training phase). Symmetrically, we consider the case where agent $A$ learns with the demonstrations from $B$ on sampled trajectories, and $\pi_B$ effectively serves as a noisy version of expert policy.

\jk{Following~\cite{SongLYO19}, we assume a mapping function $f_{A\rightarrow B}$ exists that transforms states under view $A$ into $B$.}
Denote by $\tau^A = \{(s^A_i, a^A_i, r^A_i)\}_{i=1}^N$ the trajectory that $\pi_A$ generates via interacting with the partial world $\mathcal{M}^A$. Then $\pi_B$ replaces each action $a^A_i$ with its selection $\tilde a^{B}_i = \pi_B(f_{A\rightarrow B}(s^A_i))$
as the weak supervision. %
\jk{To recover the clean expert policy, we %
adapt the BC peer evaluation term to the co-learning objective function:}
{\fontsize{9.2}{12}\selectfont
\begin{align}
    \eval^\mathrm{CT}(\pi_\theta) 
    &= \mathbb E \Bigl[ \eva^\mathrm{RL}\bigl((s_i^A, a_i^A), r_i^A\bigr) + \eva^\mathrm{BC}\bigl((s_i^A, a_i^A), \tilde a_i^{B}\bigr)\Bigr] - \xi \cdot \mathbb E \Bigl[ \eva^\mathrm{BC}\bigl((s_j^A, a_j^A), \tilde a_k^{B}\bigr) \Bigr] ,
    \label{eqn:co_train}
\end{align}
}%
where the first expectation is taken over $(s_i^A,a_i^A,r_i^A) \sim \tau^A$, and $\tilde a_i^{B} = \pi_B(f_{A\rightarrow B}(s_i^A))$, and the second is taken over $(s_j^A,a_j^A,r_j^A) \sim \tau^A, (s_k^A,a_k^A,r_k^A) \sim \tau^A$, and $\tilde a_k^{B} = \pi_B( f_{A\rightarrow B}(s_k^A))$,
$\ell$ is the loss function defined in Eqn. (\ref{eq:eva_IL}) to measure the policy difference, and $\eva^{\mathrm{RL}}, \eva^{\mathrm{BC}}$ are defined in Eqn. (\ref{eq:eva_RL}) and (\ref{eq:eva_IL}) respectively. The full algorithm PeerCT is provided in Algorithm~\ref{alg:co-train-peer}. We omit detailed discussions on the convergence of PeerCT - it can be viewed as a straight-forward extension of Theorem~\ref{thrm:error} in the context of co-training.

\section{Experiments}
We evaluate our solution in three challenging weakly supervised PL problems. Experiments on control games and Atari show that, without any prior knowledge of the noise, our approach is able to leverage weak supervision more effectively. %

\textbf{Experiment Setup \& Baselines} We evaluate PeerPL on a wide variety of control and Atari games. For RL with noisy reward, we add synthetic noise to reward signals and compare with previous work~\cite{wang2020rlnoisy}, where an unbiased estimator of true reward is constructed by approximating the confusion matrix. For BC from weak demonstrations, we adopt not fully converged PPO agents as the weak experts and unroll the trajectories. We also consider a standard policy co-training setting~\cite{SongLYO19} without any synthetic noise added and compare PeerCT with single-view training paradigm and CoPiEr~\cite{SongLYO19}.

\subsection{PeerRL with Noisy Reward}
\textbf{\textit{CartPole-v0}:}~ We first evaluate our method in RL with noisy reward setting. Following~\cite{wang2020rlnoisy}, we consider the binary reward $\{-1, 1\}$ for Cartpole where the symmetric noise is synthesized with different error rates $e = e_{-} = e_{+}$. We choose DQN~\citep{dqn1} and DDQN~\citep{dueling-dqn} algorithms and train the models for 10,000 steps. We repeat each experiment 10 times with different random seeds and leave extra results in Appendix D.  %
Figure~\ref{fig:cartpole} shows the learning curves for DDQN with different approaches in noisy environments ($\xi = 0.2$) \footnote{We analysed the sensitivity of $\xi$ and found the algorithm performs reasonable when $\xi \in (0.1, 0.4)$. More insights and experiments with varied $\xi$ is deferred to Appendix D.}. %
Since the number of training steps is fixed, the faster the algorithm converges, the fewer total episodes the agent will involve thus the learning curve is on the left side. As a consequence, the proposed peer reward outperforms other baselines significantly even in a high-noise regime (e.g., $e = 0.4$). %
Table~\ref{tab:rl_peer} provides quantitative results on the average reward $\mathcal{R}_{avg}$ and total episodes $N_{epi}$. We find the agents with peer reward lead to a larger $\mathcal{R}_{avg}$ (less generalization error) and a smaller $N_{epi}$ (faster convergence) consistently.

\begin{table*}[t]
\setlength{\tabcolsep}{4.5pt}
\caption{Numerical performance of DDQN on CartPole with true reward ($r$), noisy reward ($\tilde{r}$), surrogate reward $\hat{r}$~\citep{wang2020rlnoisy}, and peer reward $\tilde{r}_{\mathrm{peer}} (\xi = 0.2)$. $\mathcal{R}_{avg}$ denotes average reward per episode after convergence, %
the higher ($\uparrow$) the better; $N_{epi}$ denotes total episodes involved in 10,000 steps, the lower ($\downarrow$) the better.
Note $0 \le e < 0.5$.
}
\label{tab:rl_peer}
\vspace{-0.1in}
\centering
\resizebox{\textwidth}{!}{
\begin{tabular}{@{}cccccccccc@{}}
\toprule
 &  & \multicolumn{2}{c}{$e = 0.1$} & \multicolumn{2}{c}{$e = 0.2$} & \multicolumn{2}{c}{$e = 0.3$} & \multicolumn{2}{c}{$e = 0.4$} \\ \cmidrule(l){3-10} 
 &  & $\mathcal{R}_{avg}\uparrow$ & $N_{epi}\downarrow$ & $\mathcal{R}_{avg}\uparrow$ & $N_{epi}\downarrow$ & $\mathcal{R}_{avg}\uparrow$ & $N_{epi}\downarrow$ & $\mathcal{R}_{avg}\uparrow$ & $N_{epi}\downarrow$ \\ \midrule
\multicolumn{1}{c|}{\multirow{4}{*}{DDQN}} & \multicolumn{1}{c|}{$r$} & $195.6\pm3.1$ & $101.2\pm3.2$ & $195.6\pm3.1$ & $101.2\pm3.2$ & $195.6\pm3.1$ & $101.2\pm3.2$ & $195.2\pm3.0$ & $101.2\pm3.3$ \\
\multicolumn{1}{c|}{} & \multicolumn{1}{c|}{$\tilde{r}$} & $185.2\pm15.6$ & $114.6\pm6.0$ & $168.8\pm13.6$ & $123.9\pm9.6$ & $177.1\pm11.2$ & $133.2\pm9.1$ & $185.5\pm10.9$ & $163.1\pm11.0$ \\
\multicolumn{1}{c|}{} & \multicolumn{1}{c|}{$\hat{r}$} & $183.9\pm10.4$ & $110.6\pm6.7$ & $165.1\pm18.2$ & $113.9\pm9.6$ & $\boldsymbol{192.2\pm10.9}$ & $115.5\pm4.3$ & $179.2\pm6.6$ & $125.8\pm9.6$ \\
\multicolumn{1}{c|}{} & \multicolumn{1}{c|}{$\tilde{r}_{\mathrm{peer}}$} & \cellcolor{Gray}$\boldsymbol{198.5\pm2.3}$ & \cellcolor{Gray}$\boldsymbol{86.2\pm5.0}$ & \cellcolor{Gray}$\boldsymbol{195.5\pm9.1}$ & \cellcolor{Gray}$\boldsymbol{85.3\pm5.4}$ & \cellcolor{Gray}$174.1\pm32.5$ & \cellcolor{Gray}$\boldsymbol{88.8\pm6.3}$ & \cellcolor{Gray}$\boldsymbol{191.8\pm8.5}$ & \cellcolor{Gray}$\boldsymbol{106.9\pm9.2}$ \\ \bottomrule
\end{tabular}
}
\vspace{-1mm}
\end{table*}

\begin{figure*}[t]
\centering
\begin{subfigure}[b]{.245\textwidth}
    \centering
    \includegraphics[width=\textwidth]{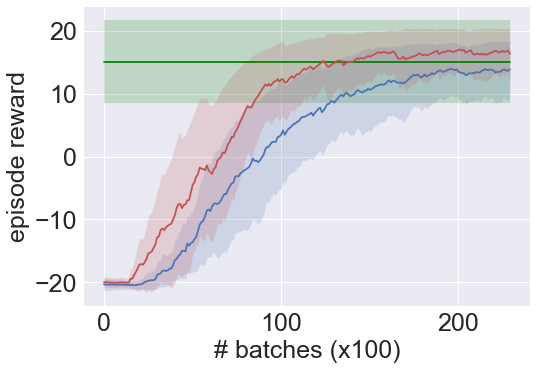}
    \vspace{-0.19in}
    \caption{Pong}
\end{subfigure}
\begin{subfigure}[b]{.245\textwidth}
    \centering
    \includegraphics[width=\textwidth]{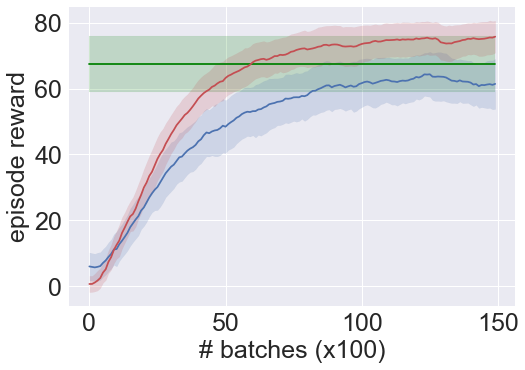}
    \vspace{-0.19in}
    \caption{Boxing}
\end{subfigure}
\begin{subfigure}[b]{.245\textwidth}
    \centering
    \includegraphics[width=\textwidth]{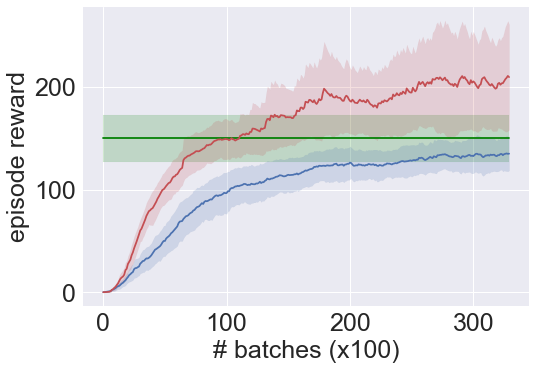}
    \vspace{-0.19in}
    \caption{Enduro}
\end{subfigure}
\begin{subfigure}[b]{.245\textwidth}
    \centering
    \includegraphics[width=\textwidth]{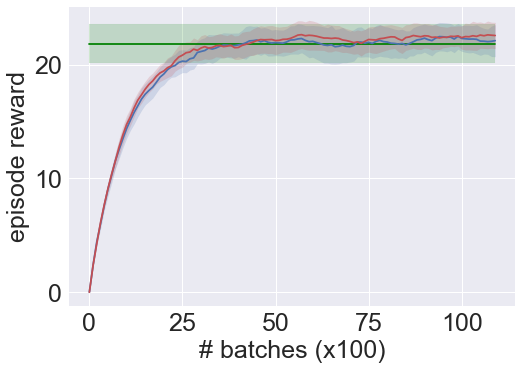}
    \vspace{-0.19in}
    \caption{Freeway}
\end{subfigure}
\vspace{-0.2in}
\caption{%
Learning curves of BC on Atari. Standard BC \crule[blue]{0.30cm}{0.30cm}, PeerBC (ours) \crule[red]{0.30cm}{0.30cm}, expert \crule[green]{0.30cm}{0.30cm}.
} %
\label{fig:imitation}
\vspace{-0.15in}
\end{figure*}

\textbf{\textit{Pendulum}:} ~
We further conduct experiments on a continuous control task \textit{Pendulum}, where the goal is to keep a frictionless pendulum standing up. Since the rewards in pendulum are continuous: $r \in (-16.3, 0.0]$, we discretized it into 17 intervals: $(-17, -16], (-16, -15], \cdots, (-1, 0]$, with its value approximated using its maximum point.
We test DDPG~\citep{ddpg} with uniform noise in this environment following~\cite{wang2020rlnoisy}. In Figure~\ref{fig:pendulum}, the RL agents with the proposed CA objective successfully converge to the optimal policy under different amounts of noise. On the contrary, the agents with noisy rewards suffer from biased noise, especially in a high-noise regime. %

\textbf{Analysis of the benefits in PeerRL}~
More surprisingly, we observed that 
the agents on CartPole with peer reward even lead to faster convergence than the ones observing true reward perfectly when the noise rate $e$ is small. This indicates the possibility of other benefits to further promote peer reward, other than the noise reduction one we primarily focused on. %
We hypothesize this is because 
(1) the peer penalty term breaks the tie states (Benefit-2 in Section~\ref{sec:benefit_peerRL}) and encourages explorations in RL; (2) PeerRL scales the reward signals appropriately for easier learning;
(3) the human-specific ``true reward'' might be also imperfect which leads to a weak supervision scenario.
We emphasize that the advantage of recovering from noisy reward signal is non-negligible, especially in a high-noise regime (e.g., $e = 0.4$ in Figure~\ref{fig:cartpole} and \ref{fig:pendulum}).

\subsection{PeerBC from Weak Demonstrations}
\begin{figure*}
\vspace{-3mm}
\centering
\begin{subfigure}[t]{.24\textwidth}
    \centering
    \includegraphics[width=\textwidth]{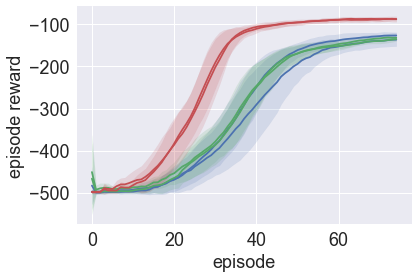}
    \caption{Acrobot}
    \label{fig:env_acrobot}
\end{subfigure}
\begin{subfigure}[t]{.24\textwidth}
    \centering
    \includegraphics[width=\textwidth]{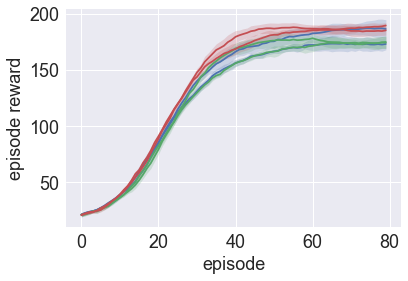}
    \caption{CartPole}
    \label{fig:env_cartpole}
\end{subfigure}
\begin{subfigure}[t]{.24\textwidth}
    \centering
    \includegraphics[width=\textwidth]{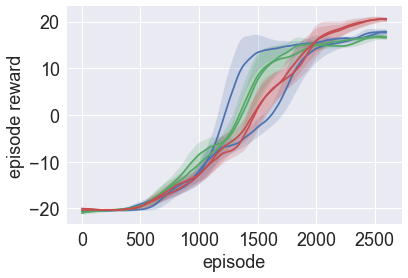}
    \caption{Pong}
    \label{fig:env_pong}
\end{subfigure}
\begin{subfigure}[t]{.24\textwidth}
    \centering
    \includegraphics[width=\textwidth]{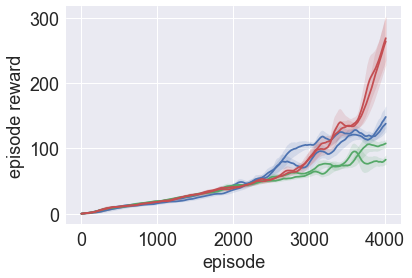}
    \caption{Breakout}
    \label{fig:env_breakout}
\end{subfigure}
\vspace{-0.10in}
\caption{Policy co-training on control/Atari. Single view~\crule[blue]{0.30cm}{0.30cm}, \cite{SongLYO19}~\crule[green]{0.30cm}{0.30cm}, PeerCT (ours) \crule[red]{0.30cm}{0.30cm}.
}
\label{fig:co-train}
\vspace{-4mm}
\end{figure*}

\begin{table*}[t]
\caption{BC from weak demonstrations. PeerBC successfully recovers better policies than expert.
}
\vspace{-0.1in}
\label{tab:il_peer}
\centering
\resizebox{0.75\textwidth}{!}{
\begin{tabular}{c|c|ccccc}
    \toprule
    \multicolumn{2}{c|}{Environment} & Pong & Boxing & Enduro & Freeway & Lift ($\uparrow$) \\
    \midrule
    \multicolumn{2}{c|}{Expert} & $15.1 \pm 6.6$ & $67.5 \pm 8.5$ & $150.1 \pm 23.0$ & $21.9 \pm 1.7$ & - \\
    \multicolumn{2}{c|}{Standard BC} & $14.7 \pm 3.2$ & $56.2 \pm 7.7$ & $138.9 \pm 14.1$ & $22.0 \pm 1.3$ & $-6.6\%$ \\
    \midrule
    \multirow{3}{*}{PeerBC} & $\xi = 0.2$ &  $\mathbf{18.8 \pm 0.6}$ & $67.2 \pm 8.4$ & $177.9 \pm 29.3$ &  $\mathbf{22.5 \pm 0.6}$ & $+11.3\%$ \\
     & $\xi = 0.5$ & $16.6 \pm 4.0$ & $\mathbf{75.6 \pm 5.4}$ &  $\mathbf{230.9 \pm 73.0}$ & $22.4 \pm 1.3$ & $\boldsymbol{+19.5\%}$\\
     &  $\xi = 1.0$ & $16.7 \pm 4.3$ & $69.7 \pm 4.7$ & $230.4 \pm 61.6$ & $8.9 \pm 4.9$ & $+2.0\%$\\ \midrule
     \multicolumn{2}{c|}{Fully converged PPO} & $20.9 \pm 0.3$ & $89.3 \pm 5.4$ & $389.6 \pm 216.9$ & $33.3 \pm 0.8$ & - \\
    \bottomrule
\end{tabular}
}
\vskip-0.15in
\end{table*}

\textbf{\textit{Atari}:}~ In BC setting, we evaluate our approach on four vision-based Atari games. 
For each environment, we train an imperfect RL model with PPO~\citep{ppo} algorithm. Here, ``imperfect'' means the training is terminated before convergence when the performance is about $70 \% \sim 90 \%$ as good as the fully converged model. We then collect the imperfect demonstrations using the expert model 
and generate 100 trajectories for each environment. The results are reported under three random seeds. 

Figure \ref{fig:imitation} shows that our approach outperforms standard BC and even the expert it learns from. Note that during the whole training process, the agent never learns by interacting directly with the environment but only have access to the expert trajectories. Therefore, we owe this performance gain to PeerBC's strong ability for learning from weak supervision. The peer term we add not only provably eliminates the effects of noise but also extracts useful strategy from the demonstrations.
As shown in Table~\ref{tab:il_peer}, our approach consistently outperforms the expert and standard BC. %
We provide the sensitivity analysis of $\xi$ in Appendix D.

\textbf{Comparison with imitation learning baselines}~
We further extend the empirical study to imitation learning (IL) algorithms on \textit{CartPole-v1}. %
To collect weak demonstrations, we train a PPO agent for 50k iterations that are not fully converged. As shown in Figure~\ref{fig:peer_il}, standard IL algorithms such as BC, AIRL~\cite{Reddy2019}, or GAIL~\cite{Ho2016a} cannot handle noisy demonstrations well and lead to sub-optimal performance. Our PeerBC brings 18\% improvement over standard BC by penalizing blind agreements with the weak demonstrations. We remark that performance of PeerBC is worse than DAgger due to notorious distribution shift issue. To further improve performance, we train PeerBC in the DAgger fashion (Peer-DAgger) by querying the imperfect expert to augment the training sets. Not surprisingly, Peer-DAgger surpasses DAgger by a large margin, which indicates that our framework has wide applicability and successfully recovers the true supervision signals.
Adapting PeerPL idea to more IL algorithms such as GAIL~\cite{Ho2016a} and DART~\cite{dart} together with rigorous analysis is left as future works. 

\begin{wrapfigure}{R}{0.5\textwidth}
\begin{minipage}{0.5\textwidth}
    \vspace{-0.2in}
    \centering
    \includegraphics[width=\textwidth]{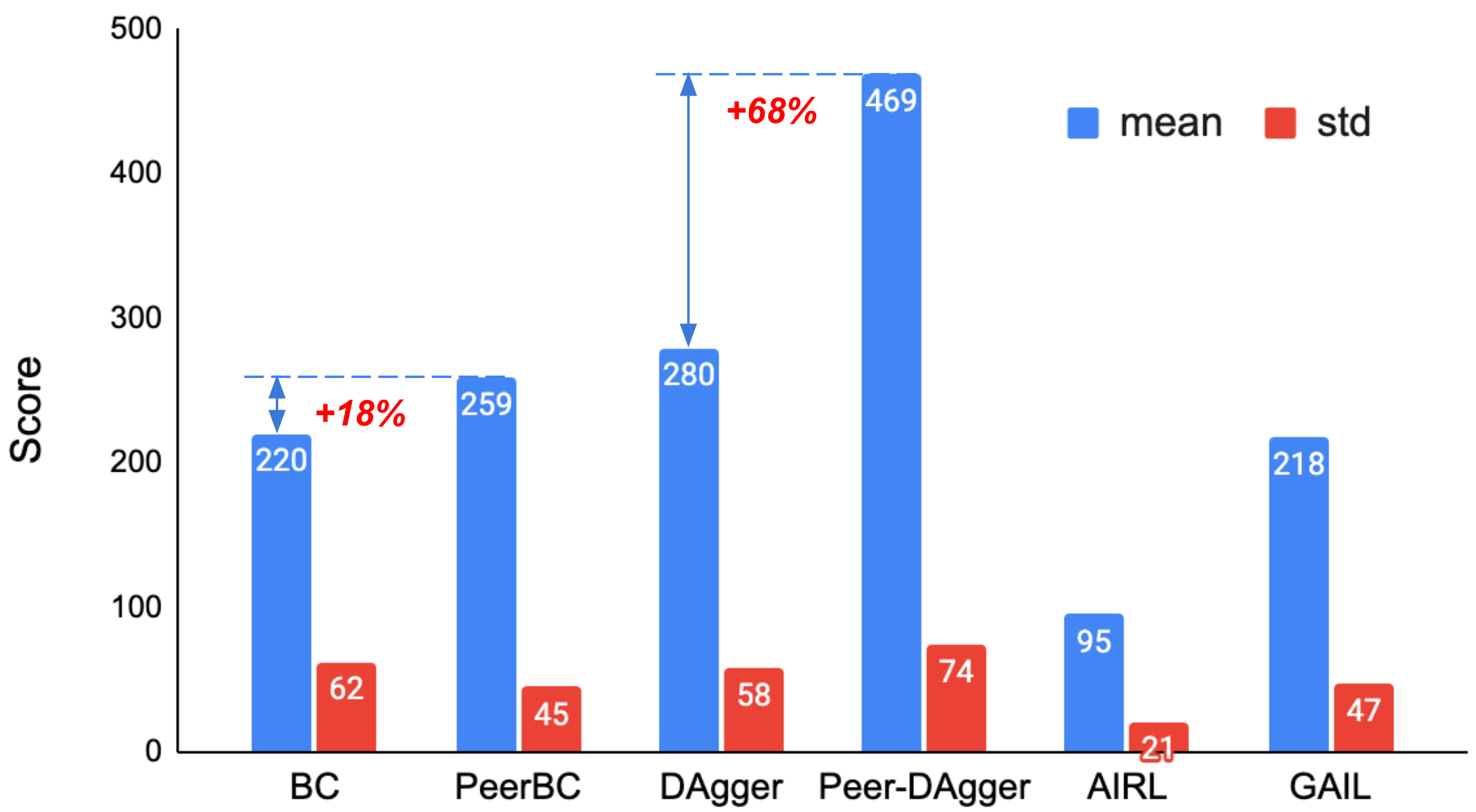}
    \caption{Comparison of imitation learning approaches on \textit{CartPole-v1} with imperfect expert.}
    \vspace{-0.25in}
    \label{fig:peer_il}
\end{minipage}
\end{wrapfigure}

\textbf{Analysis of benefits in PeerBC}~
Similarly, the performance improvement of PeerBC might be also coupled with multiple possible factors. 
(1) The imperfect expert model might be a noisy version of the fully-converged agent since there are less visited states on which the selected actions of the model contains noise.
(2) The improvements might be brought up by biasing against high-entropy policies thus PeerBC is useful when the true policy itself is deterministic. 
We provide more discussions about the second factor in Appendix~\ref{sec:stochastic}.

\begin{table}[t]
\vspace{-0.05in}
\caption{Comparison with single view training and CoPiEr~\citep{SongLYO19} on standard policy co-training.}
\label{tab:ct_peer}
\centering
\resizebox{0.7\textwidth}{!}{
\begin{tabular}{c|c|cccc}
    \toprule
    \multicolumn{2}{c|}{Environment} & Acrobot & CartPole & Pong & Breakout \\ %
    \midrule
    \multirow{2}{*}{Single View} & A & \(-136.6 \pm 15.6\) & \(172.8 \pm 5.5\) & \(17.8 \pm 0.6\) & \(148.0 \pm 16.5\) \\ %
    & B & \(-126.4 \pm 8.0\) & \(186.7 \pm 8.1\) & \(17.7 \pm 0.5\) & \(137.8 \pm 12.5\) \\ %
    \midrule
    \multirow{2}{*}{CoPiEr} & A & \(-136.2 \pm 5.2\) & \(174.1 \pm 5.1\) & \(16.8 \pm 0.5\) & \(107.5 \pm 5.8\) \\ %
    & B & \(-131.5 \pm 4.5\) & \(174.3 \pm 5.4\) & \(16.5 \pm 0.2\) & \(82.7 \pm 6.9\) \\ %
    \midrule
    \multirow{2}{*}{PeerCT} & A & \(\mathbf{-87.0 \pm 3.9} \)& \(\mathbf{188.8 \pm 2.7}\) & \(\mathbf{20.5 \pm 0.4}\) & \(263.6 \pm 36.0\) \\ %
    & B & \(-87.1 \pm 6.3\) & \(184.7 \pm 3.9\) & \(20.4 \pm 0.5\) & \(\mathbf{268.6 \pm 33.6}\) \\ %
    \bottomrule
\end{tabular}
}
\vspace{-0.1in}
\end{table}

\vspace{-0.05in}
\subsection{PeerCT for Standard Policy Co-training}
\textbf{\textit{Continuous Control/Atari}:}~
Finally, we verify the effectiveness of the PeerCT algorithm in policy co-training setting~\citep{SongLYO19}. 
This setting is more challenging since the states are partially observable and each agent needs to imitate another agent's behavior that is highly biased and imperfect. Note that we adopt the exact same setting as~\cite{SongLYO19} \textbf{without any synthetic noise} included. This implies the potential of our approach to deal with natural noise in real-world applications.
Following~\cite{SongLYO19}, we mask the first two dimensions respectively in the state vector to create two views for co-training in classic control games (Acrobot and CartPole). Similarly, the agent either removes
all even index coordinates (view-$A$) in the state
vector or removing all odd index ones (view-$B$) on Atari games.
As shown in Table~\ref{tab:ct_peer} and Figure~\ref{fig:co-train}, PeerCT algorithm outperforms training from single view, and CoPiEr algorithm consistently on both control games ($\xi=0.5$ in Figure~\ref{fig:env_acrobot},~\ref{fig:env_cartpole}) and Atari games ($\xi=0.2$ in Figure~\ref{fig:env_pong},~\ref{fig:env_breakout}). In most cases, our approach leads to a faster convergence and lower generalization error compared to CoPiEr, showing that our ways of leveraging information from peer agent enables recovery of useful knowledge from highly imperfect supervision.

\section{Conclusion}\label{sec:conclude}
We have proposed PeerPL, a weakly supervised policy learning framework to unify a series of RL/BC problems with low-quality supervision signals.
In PeerPL, instead of blindly memorizing the weak supervision, we evaluate a learning policy's correlated agreements with the weak supervision. 
We demonstrate how our method adapts in RL/BC and the hybrid co-training tasks and provide analysis of the convergence rate and sample complexity.
Current theorems focus on the specific discrete noise model.
Future work may extend it to more general noise scenarios and evaluate our method on real RL/BC systems, such as robotics and self-driving.

\section*{Broader Impacts} 
Weak supervision often encodes biases and noise. Our works aim to improve the robustness of policy learning algorithms which is relevant to applications concerning fairness and training data biases. Our solutions are expected to be of interests to machine learning practitioners and researchers who are interested in applications and theory in RL.
We acknowledge that the use of AI technology may bring us an unexpected impact. While we are not aware of any negative social impact, we caution that our theoretical guarantees are mostly for the scenario with a large number of samples. Using our method when the number of weak supervisions is very limiting might lead to unstable performance and unintended consequences, especially when the supervisions are highly noisy.

\section*{Acknowledgement}
We sincerely thank the anonymous reviewers for their insightful suggestions. Our final version benefited substantially from the discussions with Reviewer 58fG. In particular, the tie-breaking analysis together with the code snippet is designed and contributed by Reviewer 58fG.
This work is partially supported by the National Science Foundation (NSF) under grant IIS-2007951 and the Office of Naval Research under grant
N00014-20-1-2240. Resources used in preparing this research were provided, in part, by the Province of Ontario, the Government of Canada through CIFAR, and companies sponsoring the Vector Institute.

\bibliography{reference}
\bibliographystyle{unsrt}

\newpage
\input{supp}

\end{document}